\documentclass[10pt,twocolumn]{article}

\usepackage{amssymb}
\usepackage{times}
\usepackage{amsfonts}
\usepackage[T1]{fontenc}
\usepackage{amsthm}
\newtheorem{proposition}{Proposition}

\usepackage[numbers,sort&compress]{natbib}

\usepackage{amsmath, amsthm}
\usepackage{mathtools}
\usepackage{microtype}
\usepackage{bm}
\usepackage{enumitem}
\usepackage{subcaption}
\newtheorem{assumption}{Assumption}
\usepackage{hyperref}

\newtheorem{definition}{Definition}

\theoremstyle{remark}
\usepackage{booktabs}     
\usepackage{multirow}    
\usepackage{array}

\usepackage{framed,xcolor}
\usepackage{graphicx}
\usepackage{tabularx}
\definecolor{PCAccent}{RGB}{85,78,68}
\definecolor{PCBorder}{RGB}{227,222,214}
\definecolor{PCBack}{RGB}{252,250,246}
\definecolor{PCTitle}{RGB}{38,36,33}
\definecolor{PCBand}{RGB}{246,243,238}

\definecolor{PCAccent}{gray}{0.25}   
\definecolor{PCBorder}{gray}{0.70}   
\definecolor{PCBack}{gray}{0.98}     
\definecolor{PCTitle}{gray}{0.10}    
\definecolor{PCBand}{gray}{0.94}     

\title{Knowing When Not to Answer: Abstention-Aware Scientific Reasoning}
\author{
Samir Abdaljalil\\
Texas A\&M University\\
\texttt{sabdjalil@tamu.edu}
\and
Erchin Serpedin\\
Texas A\&M University\\
\texttt{eserpedin@tamu.edu}
\and
Hasan Kurban\\
Hamad Bin Khalifa University\\
\texttt{hkurban@hbku.edu.qa}
}
\date{}

\renewenvironment{abstract}
 {
  \centerline{\bfseries Abstract}
  \vspace{0.5em}
  \begin{quote}
  \small
 }
 {
  \end{quote}
  \vspace{1em}
 }

\begin{document}

\maketitle
\begin{abstract}
Large language models are increasingly used to answer and verify scientific claims, yet existing evaluations typically assume that a model must always produce a definitive answer. In scientific settings, however, unsupported or uncertain conclusions can be more harmful than abstaining. We study this problem through an abstention-aware verification framework that decomposes scientific claims into minimal conditions, audits each condition against available evidence using natural language inference (NLI), and selectively decides whether to support, refute, or abstain.
We evaluate this framework across two complementary scientific benchmarks: SciFact and PubMedQA, covering both closed-book and open-domain evidence settings. Experiments are conducted with six diverse language models, including encoder-decoder, open-weight chat models, and proprietary APIs. Across all benchmarks and models, we observe that raw accuracy varies only modestly across architectures, while abstention plays a critical role in controlling error. In particular, confidence-based abstention substantially reduces risk at moderate coverage levels, even when absolute accuracy improvements are limited.
Our results suggest that in scientific reasoning tasks, the primary challenge is not selecting a single best model, but rather determining when available evidence is sufficient to justify an answer. This work highlights abstention-aware evaluation as a practical and model-agnostic lens for assessing scientific reliability, and provides a unified experimental basis for future work on selective reasoning in scientific domains. Code is available at \\
\url{https://github.com/sabdaljalil2000/ai4science}.
\end{abstract}
\section{Introduction}
\label{sec:intro}
Large language models are increasingly used in scientific workflows to answer questions, assess factual claims, and summarize evidence \cite{yildiz2025,zheng-etal-2025-automation}. In many deployed settings, these systems are treated as conventional predictors that are expected to always produce a definitive output. However, scientific reasoning differs fundamentally from many standard prediction tasks \cite{liu2026wildsciadvancingscientificreasoning}. In scientific contexts, an incorrect conclusion can be more harmful than providing no conclusion at all, particularly when evidence is incomplete, ambiguous, or weak. Despite this, most existing evaluations of scientific reasoning systems focus exclusively on unconditional accuracy \cite{rueda2025understandingllmscientificreasoning}. Models are scored based on whether they return the correct label or answer, with no explicit mechanism for expressing uncertainty or declining to answer. As a result, systems are implicitly incentivized to guess even when available evidence does not justify a confident decision. This evaluation paradigm obscures a critical aspect of scientific reliability: the ability to recognize when evidence is insufficient.

In this work, we study scientific claim verification and question answering through the lens of abstention-aware decision making. Rather than assuming that a system must always answer, we explicitly model the option to abstain. Our approach decomposes scientific inputs into minimal conditions, audits each condition against evidence using NLI, and aggregates these audits into a final decision that may support, refute, or abstain. This structure allows the system to separate evidence assessment from decision confidence, and enables principled control over the tradeoff between error and coverage. We evaluate this framework on two widely used scientific benchmarks, SciFact \cite{wadden-etal-2020-fact} and PubMedQA \cite{jin-etal-2019-pubmedqa}. Our experimental results reveal three consistent patterns. First, across models and datasets, unconditional accuracy varies within a relatively narrow range. Larger or proprietary models do not reliably dominate smaller or open models under standard accuracy metrics. Second, confidence-based abstention dramatically reduces error at moderate coverage levels. Even when absolute accuracy improvements are limited, selectively withholding low-confidence predictions leads to substantial reductions in risk. Third, these trends are remarkably stable across model families, suggesting that abstention behavior is more predictive of reliability than architectural differences. These findings point to a broader conclusion. In scientific reasoning tasks, the primary challenge is often not selecting the single best model, but determining when available evidence is sufficient to justify an answer. Our results suggest that abstention-aware evaluation provides a more informative and practically relevant lens for assessing scientific reasoning systems than accuracy alone.

This paper makes three contributions: (i) we introduce an absten-tion-aware verification and question answering pipeline that decomposes reasoning into auditable conditions and supports selective prediction; (ii) we provide a systematic empirical study across multiple scientific benchmarks and model families, demonstrating that risk control through abstention is a dominant factor in scientific reliability; and (iii) we offer a unified evaluation framework based on risk–coverage analysis that exposes behaviors hidden by standard metrics. Together, these contributions argue for a shift in how scientific reasoning systems are evaluated: rather than asking whether a model can answer correctly, we advocate for asking whether it can recognize when it should not answer at all.


\section{Background and Related Works}
\label{sec:related}
\noindent
\textbf{Scientific Reasoning and Claim Verification.}
Automated reasoning over scientific text has received growing attention in recent years \cite{lin-etal-2025-llm,yin2025scientificreasoningllmstraining}. Benchmarks such as SciFact and PubMedQA were introduced to evaluate whether models can assess the validity of scientific claims or answer biomedical questions using textual evidence. These datasets emphasize evidence grounding and scientific rigor, distinguishing them from general fact checking benchmarks. Prior work has explored neural architectures for joint evidence selection and claim classification \cite{SAPUTRA2024100489,adhikari-etal-2019-rethinking}, as well as the use of NLI to align claims with supporting or contradicting sentences \cite{gemechu-etal-2025-natural,mirallesgonzalez2025pushingboundarynaturallanguage,kim-etal-2023-conditional}. While these efforts have advanced performance on benchmark metrics, most evaluations assume that a model must always produce a definitive answer, even in the presence of ambiguous or incomplete evidence \cite{koupaee-etal-2025-faithful}.\\ 
\noindent
\textbf{Uncertainty, Abstention, and Selective Prediction.}
The problem of deciding when to abstain has a long history in statistical learning under the framework of selective prediction. Some work formalizes the tradeoff between coverage and risk, showing that allowing a model to withhold predictions can strictly improve reliability under appropriate confidence estimates \cite{kapoor2024,yin-etal-2023-large}. More recent work has revisited selective prediction in neural models, focusing on calibration, confidence scoring, and risk control \cite{parikh2026cattobalancingpreferencesconfidence}. In parallel, abstention has been studied in safety-critical domains such as medical decision support and information retrieval. However, in the context of scientific reasoning with language models, abstention is rarely treated as a first-class evaluation dimension. Instead, uncertainty is often assessed indirectly through calibration metrics or ignored entirely in favor of accuracy.
 \\
\noindent
\textbf{Abstention in LLMs.}
Some recent literature has begun to study abstention behavior in LLMs. More broadly, abstention has been studied in general domain QA and language modeling through prompt-based strategies \cite{10.1145/3477495.3532048}, uncertainty estimation\cite{yin-etal-2023-large}, and selective generation frameworks\cite{asai-choi-2021-challenges}. Our approach treats abstention as a first-class evaluation dimension for scientific reasoning, emphasizing selective risk over unconditional accuracy. This complements prior analyses by providing a unified pipeline that connects decomposition, evidence auditing, and abstention-aware evaluation across both verification and biomedical QA tasks.
 \\
\noindent
\textbf{Evaluation of Language Models in Scientific Domains.}
Several studies have examined the behavior of large language models on scientific benchmarks, often comparing performance across model sizes and architectures \cite{song2025evaluatinglargelanguagemodels,zhengNature2025,doi:10.1126/science.ads0018}. These evaluations typically report accuracy or F1 scores and focus on identifying the strongest model for a given task. Some work has analyzed error types or hallucination tendencies \cite{asgari2025}, but such analyses are usually qualitative or post hoc. Most closely related is the work of Wen et al.~\cite{wen-etal-2024-characterizing}, who characterize how models abstain in science question answering under controlled context perturbations. They examine how removing, replacing, or augmenting context affects abstention rates across multiple scientific QA datasets, and show that abstention behavior varies significantly by model architecture, question type, and prompt formulation. Our work differs in both problem formulation and evaluation focus. Rather than probing abstention through external context perturbations, we model abstention as an explicit decision variable within a structured verification pipeline. We decompose inputs into auditable conditions, verify each condition against evidence using a separate NLI verifier, and derive abstention decisions through selective risk control. This allows us to study abstention without modifying the input distribution and to evaluate reliability through risk–coverage analysis rather than abstention rate alone. While Wen et al.~\cite{wen-etal-2024-characterizing} focus on when models abstain under changing context, our work focuses on why abstention occurs, grounding decisions in condition-level evidence sufficiency and providing a decision-theoretic interpretation of scientific reliability.


\section{Methodology}
\label{sec:method}

This section establishes the formal framework for selective classification (Section~\ref{sec:formulation}), then describes each pipeline stage: condition decomposition (Section~\ref{sec:decomposition}), evidence auditing via NLI (Section~\ref{sec:audit}), decision aggregation under asymmetric loss (Section~\ref{sec:decision}), and confidence-based abstention (Section~\ref{sec:confidence}). We then present the risk--coverage evaluation protocol (Section~\ref{sec:rc}). Formal theoretical results and proofs are provided in Appendix~\ref{app:theory}. An illustration of the methodology is presented in Fig. \ref{fig:system_overview}.

\subsection{Problem Formulation and Selective Classification}
\label{sec:formulation}

Let $x \in \mathcal{X}$ denote a scientific input, which may be a factual claim (as in SciFact) or a question (as in PubMedQA). Let $\mathcal{E} = \{e_1, \dots, e_n\} \in \mathbb{E}$ denote a set of textual evidence sentences associated with $x$, drawn from the space of all possible evidence sets $\mathbb{E}$. The task is to produce an output $y$ from a discrete label set $\mathcal{Y}$, or to abstain when evidence is insufficient. Unlike standard classification, the prediction space is extended to $\mathcal{Y} \cup \{\bot\}$, where $\bot$ denotes abstention. We formalize this within the selective classification framework~\cite{el2010foundations,geifman2017selective}.

\begin{definition}[Selective Classifier]
\label{def:selective}
A selective classifier is a pair $(F, g)$, where $F\colon \mathcal{X} \times \mathbb{E} \to \mathcal{Y}$ is an end-to-end prediction function and $g\colon \mathcal{X} \times \mathbb{E} \to \{0, 1\}$ is a selection function. The system outputs $F(x, \mathcal{E})$ when $g(x, \mathcal{E}) = 1$ and abstains ($\bot$) when $g(x, \mathcal{E}) = 0$.
\end{definition}

\begin{definition}[Selective Risk and Coverage]
\label{def:risk_coverage}
Given a loss function $\ell\colon \mathcal{Y} \times \mathcal{Y} \to \mathbb{R}_{\geq 0}$ and a data distribution $\mathcal{D}$ over $(x, \mathcal{E}, y)$, the \emph{selective risk} and \emph{coverage} of $(F, g)$ are defined for any $(F,g)$ with $\phi(F,g)>0$ as:
\begin{equation}
\label{eq:selective_risk}
R(F, g) \;=\; \frac{\mathbb{E}_{(x, \mathcal{E}, y) \sim \mathcal{D}}\!\big[\ell\!\big(F(x, \mathcal{E}),\, y\big) \cdot g(x, \mathcal{E})\big]}{\mathbb{E}_{(x, \mathcal{E}, y) \sim \mathcal{D}}\!\big[g(x, \mathcal{E})\big]},
\end{equation}
\begin{equation}
\label{eq:coverage}
\phi(F, g) \;=\; \mathbb{E}_{(x, \mathcal{E}, y) \sim \mathcal{D}}\!\big[g(x, \mathcal{E})\big].
\end{equation}
\end{definition}

The objective is not to maximize unconditional accuracy, but to minimize selective risk subject to a controllable coverage constraint. This framing captures the core scientific requirement: errors among committed predictions should be minimized, even at the cost of answering fewer queries. In our pipeline, the end-to-end prediction function $F$ is realized through condition decomposition, evidence auditing, and rule-based aggregation (Sections~\ref{sec:decomposition}--\ref{sec:decision}). Specifically, let $k = |C|$ denote the number of decomposed conditions (Section~\ref{sec:decomposition}), let $a_i \in \mathcal{A} = \{\textsc{sup}, \textsc{con}, \textsc{mis}\}$ denote the audit result for condition $c_i$ (Section~\ref{sec:audit}), and let $\beta_i \in \{0,1\}$ indicate whether $c_i$ is critical. The end-to-end prediction is:
\begin{equation}
\label{eq:F_decomposition}
F(x, \mathcal{E}) \;=\; h\!\big(a_1(x, \mathcal{E}),\, \dots,\, a_k(x, \mathcal{E}),\; \beta_1, \dots, \beta_k\big),
\end{equation}
where $h$ is a deterministic aggregation function (Section~\ref{sec:decision}). We write $a_i(x, \mathcal{E})$ to denote the audit outcome as a function of the input and evidence; formally, $a_i(x, \mathcal{E}) = a_i(c_i(x), \mathcal{E})$, where $c_i(x)$ is the $i$-th condition produced by decomposing $x$. Since our theoretical analysis conditions on a fixed decomposition (Section~\ref{sec:decomposition}), $c_i$ is determined by $x$ and this composition is well-defined. The selection function $g$ is defined by thresholding an explicit confidence score (Section~\ref{sec:confidence}):
\begin{equation}
\label{eq:selection}
g_\tau(x, \mathcal{E}) \;=\; \mathbf{1}\!\big[\mathrm{conf}(x, \mathcal{E}) \ge \tau\big],
\end{equation}
where $\tau \in [0, 1]$ controls the coverage--risk tradeoff. This separation ensures that the decision of \emph{what to predict} and the decision of \emph{whether to predict} are governed by distinct, independently interpretable mechanisms.
\\
\noindent
\textbf{Notational conventions.} For brevity, we write $\mathrm{conf}(x)$ as shorthand for $\mathrm{conf}(x, \mathcal{E})$ and $g_\tau(x)$ for $g_\tau(x, \mathcal{E})$ when the associated evidence set is clear from context. Similarly, $\phi(\tau) \coloneqq \phi(F, g_\tau) = \mathbb{E}_{\mathcal{D}}[g_\tau(x, \mathcal{E})]$ denotes the coverage at threshold $\tau$. Unless otherwise stated, $\ell$ denotes the 0--1 loss, $\ell(\hat{y}, y) = \mathbf{1}[\hat{y} \neq y]$, which we use throughout evaluation; all formal results specify their loss assumptions explicitly.

\subsection{Condition Decomposition}
\label{sec:decomposition}

The first stage of the pipeline decomposes the input $x$ into a set of $k$ minimal conditions:
\begin{equation}
\label{eq:conditions}
C = \{c_1, c_2, \dots, c_k\},
\end{equation}
where each condition $c_i$ represents a necessary factual requirement for a valid conclusion. Conditions are expressed in natural language and may be marked as \emph{critical} ($\beta_i = 1$), indicating that failure of the condition alone is sufficient to invalidate the conclusion. Let $C_{\mathrm{crit}} = \{c_i \in C : \beta_i = 1\}$ denote the subset of critical conditions. We require $|C_{\mathrm{crit}}| \geq 1$; every decomposition must identify at least one critical condition. This ensures that the confidence score (Section~\ref{sec:confidence}) and aggregation rules (Section~\ref{sec:decision}) are well-defined. Condition decomposition is performed by a language model prompted to rewrite the input into atomic, testable statements. For example, a claim asserting a causal relationship may be decomposed into conditions concerning population, intervention, outcome, and directionality. This step reduces complex scientific reasoning into simpler sub-problems that can be independently audited against evidence, and introduces an explicit intermediate representation that makes the reasoning process inspectable. A probabilistic factorization view of condition-level auditing, along with the audit-independence assumption and its implications, is provided in Appendix~\ref{sec:appendix_factorization}.

\subsection{Condition Auditing via Natural Language Inference}
\label{sec:audit}

Each condition $c_i$ is audited against the evidence set $\mathcal{E}$ using a pretrained NLI model. For each condition--evidence pair $(c_i, e_j)$, the NLI model outputs a probability distribution over entailment, contradiction, and neutrality. For a given condition $c_i$, we aggregate evidence by selecting the most supportive and most contradictory sentences:
\begin{equation}
\label{eq:evidence_scores}
s_i^{\mathrm{ent}} = \hspace*{-2mm} \max_{j \in \{1,\dots,n\}} P(\text{entail} \mid c_i, e_j), \
s_i^{\mathrm{con}} =  \hspace*{-2mm} \max_{j \in \{1,\dots,n\}} P(\text{contradict} \mid c_i, e_j).
\end{equation}

Note that $s_i^{\mathrm{ent}}$ and $s_i^{\mathrm{con}}$ may be maximized by different evidence sentences, since the maximum is taken independently for each relation. Based on fixed thresholds $(\theta_{\mathrm{ent}}, \theta_{\mathrm{con}})$, each condition is assigned a discrete audit status $a_i \in \mathcal{A}$. Consistent with the asymmetric loss structure formalized in Section~\ref{sec:decision} (Assumption~\ref{assump:loss}), contradiction takes priority when both thresholds are simultaneously exceeded:
\begin{equation}
\label{eq:audit_status}
a_i =
\begin{cases}
\textsc{con} & \text{if } s_i^{\mathrm{con}} \geq \theta_{\mathrm{con}} \\[2pt]
\textsc{sup} & \text{if } s_i^{\mathrm{ent}} \geq \theta_{\mathrm{ent}} \;\text{and}\; s_i^{\mathrm{con}} < \theta_{\mathrm{con}} \\[2pt]
\textsc{mis} & \text{otherwise}.
\end{cases}
\end{equation}

Neutral predictions are treated as absence of evidence rather than weak support, reflecting scientific norms where lack of evidence does not imply correctness. The NLI verifier is a DeBERTa-based cross-encoder trained on large-scale general-domain NLI datasets, including SNLI and MultiNLI. It is not fine-tuned on SciFact or PubMedQA, ensuring that auditing remains decoupled from downstream task labels. While explicit probability calibration is not performed on the NLI outputs, thresholds are fixed across datasets and models to isolate the effects of pipeline structure.
\\ \\
\noindent
\textbf{Choice of NLI Model.}
We employ a general-domain NLI model, rather than a biomedical-specific model since
the verifier is used exclusively to assess textual entailment or contradiction between a condition and a candidate evidence sentence, not to perform domain-specific scientific reasoning. 
Since our analysis focuses on selective risk--coverage behavior rather than absolute task performance, this choice provides a consistent and safety-oriented evaluation setting.
\\ \\
\noindent
\textbf{Threshold selection.}
Entailment and contradiction thresholds $(\theta_{\mathrm{ent}},$
$ \theta_{\mathrm{con}})$ are fixed across all experiments. Thresholds are selected conservatively to prioritize precision over recall, reflecting the asymmetric cost of false support in scientific settings. We do not tune thresholds per dataset or per model. Instead, thresholds are chosen once and held constant to ensure comparability across models and tasks. This design avoids implicit overfitting to benchmark-specific distributions and isolates the effect of abstention behavior from threshold optimization. Implementation and computational details are provided in Appendix~\ref{sec:appendix_complexity}.

\subsection{Decision Aggregation Under Asymmetric Loss}
\label{sec:decision}

Audited conditions are aggregated into a final task-specific decision by the deterministic aggregation function:
\begin{equation}
\label{eq:aggregation_type}
h\colon \mathcal{A}^k \times \{0,1\}^k \;\to\; \mathcal{Y},
\end{equation}
which takes the vector of audit outcomes $\mathbf{a} = (a_1, \dots, a_k)$, as input, together with criticality indicators $\boldsymbol{\beta} = (\beta_1, \dots, \beta_k)$. Non-critical conditions ($\beta_i = 0$) do not participate in the aggregation rules; they serve a diagnostic role by providing additional context for human inspection but do not influence the prediction. The aggregation rules encode a key design principle grounded in the cost structure of scientific reasoning.

\begin{assumption}[Asymmetric Loss Structure]
\label{assump:loss}
The cost of prediction errors is asymmetric:
\begin{equation}
\label{eq:loss_ordering}
\ell_{\mathrm{fs}} \;>\; \ell_{\mathrm{fr}} \;>\; \ell_{\bot} = 0,
\end{equation}
where $\ell_{\mathrm{fs}}$ is the cost of false support (incorrectly affirming an unsupported claim), $\ell_{\mathrm{fr}}$ is the cost of false refutation (incorrectly rejecting a supported claim), and $\ell_{\bot}$ is the cost of abstention.
\end{assumption}

This loss structure reflects scientific practice, where affirming an unsupported conclusion (e.g., endorsing an ineffective treatment) is typically more consequential than withholding judgment. Under Assumption~\ref{assump:loss}, the decision rules are designed to be \emph{conservatively asymmetric}: a single contradicted critical condition is sufficient to invalidate a conclusion, while support requires all critical conditions to be affirmed.
\\ \\
\noindent
\textbf{Claim verification (SciFact).}
For claim verification, the aggregation rule is:
\begin{equation}
\label{eq:agg_scifact}
h(\mathbf{a}, \boldsymbol{\beta}) =
\begin{cases}
\texttt{REFUTES} & \text{if } \exists\, i\colon \beta_i\!=\!1 \;\wedge\; a_i = \textsc{con} \\[2pt]
\texttt{SUPPORTS} & \text{if } \forall\, i\colon \beta_i\!=\!1 \;\Rightarrow\; a_i = \textsc{sup} \\[2pt]
\texttt{NEI} & \text{otherwise}.
\end{cases}
\end{equation}

We note that the \texttt{REFUTES} and \texttt{SUPPORTS} cases are mutually exclusive: if every critical condition is supported, then no critical condition can be contradicted, since the audit status (Eq.~\ref{eq:audit_status}) assigns exactly one element of $\mathcal{A}$ to each condition. The \texttt{NEI} case captures all remaining configurations, including those where some critical conditions have missing evidence but none are contradicted.
\\ \\
\noindent
\textbf{Question answering (PubMedQA).}
For PubMedQA, audited conditions are mapped to answer labels as:
\begin{equation}
\label{eq:agg_pubmedqa}
h(\mathbf{a}, \boldsymbol{\beta}) =
\begin{cases}
\texttt{no} & \text{if } \exists\, i\colon \beta_i\!=\!1 \;\wedge\; a_i = \textsc{con} \\[2pt]
\texttt{yes} & \text{if } \exists\, i\colon \beta_i\!=\!1 \;\wedge\; a_i = \textsc{sup} \\
& \quad \wedge\;\; \nexists\, j\colon \beta_j\!=\!1 \;\wedge\; a_j = \textsc{con} \\[2pt]
\texttt{maybe} & \text{otherwise}.
\end{cases}
\end{equation}

Unlike claim verification, where support requires \emph{unanimity} among critical conditions (Eq.~\ref{eq:agg_scifact}), question answering adopts a weaker criterion: at least one supported critical condition suffices for an affirmative answer, provided no critical condition is contradicted. This reflects the structure of PubMedQA, where partial evidence from a biomedical abstract can be sufficient to indicate an affirmative answer, whereas claim verification demands comprehensive support for all necessary sub-claims.
\\ \\
\noindent
\textbf{Cross-stage consistency.}
In both rule sets, the refutation branch dominates: if any critical condition is contradicted, the conclusion is negative regardless of support elsewhere. This asymmetry is consistent with Assumption~\ref{assump:loss} and with the contradiction-priority ordering in the audit status assignment (Eq.~\ref{eq:audit_status}). The system thus enforces a coherent conservative bias across all pipeline stages---from evidence scoring through audit classification to final aggregation. In both tasks, the \texttt{NEI} and \texttt{maybe} labels serve as \emph{epistemic} outputs, indicating that the system recognizes evidence insufficiency prior to any abstention decision. Abstention (Section~\ref{sec:confidence}) provides an additional layer of control that operates over all label predictions, including these epistemic outputs, based on confidence.

\subsection{Confidence Estimation and Abstention}
\label{sec:confidence}

Abstention decisions are governed by an explicit confidence score derived from condition-level audit outcomes.

For each condition $c_i$, we compute a \emph{confidence margin}:
\begin{equation}
\label{eq:margin}
m_i = s_i^{\mathrm{ent}} - s_i^{\mathrm{con}},
\end{equation}
which captures the balance between supporting and contradicting evidence. Since $s_i^{\mathrm{ent}}, s_i^{\mathrm{con}} \in [0, 1]$, the margin satisfies $m_i \in [-1, 1]$. Only critical conditions contribute to instance-level confidence. The final confidence score is:
\begin{equation}
\label{eq:confidence}
\mathrm{conf}(x, \mathcal{E}) = \max_{c_i \in C_{\mathrm{crit}}} |m_i|.
\end{equation}
Since $|C_{\mathrm{crit}}| \geq 1$ by construction (Section~\ref{sec:decomposition}), the maximum is taken over a non-empty set and $\mathrm{conf}(x, \mathcal{E}) \in [0, 1]$.

\paragraph{Abstention rule.}
Abstention is applied via a threshold $\tau$:
\begin{equation}
\label{eq:abstention}
\hat{y}_\tau =
\begin{cases}
F(x, \mathcal{E}) & \text{if } \mathrm{conf}(x, \mathcal{E}) \geq \tau \\[2pt]
\bot & \text{otherwise}.
\end{cases}
\end{equation}

By varying $\tau$, the system traces a family of selective classifiers $\{(F, g_\tau)\}_{\tau \in [0,1]}$, enabling post-hoc control of the coverage--risk tradeoff without retraining. Additional discussion of the confidence design choices appears in Appendix~\ref{sec:appendix_confidence_rationale}.

\subsection{Risk--Coverage Evaluation}
\label{sec:rc}

To evaluate selective behavior, we compute risk--coverage curves by sweeping $\tau \in [0, 1]$. Let
$N_\tau = \sum_{i=1}^{N} \mathbf{1}[\mathrm{conf}(x_i, \mathcal{E}_i) \geq \tau]$
denote the number of non-abstained instances at threshold $\tau$. For $N_\tau > 0$, the empirical selective risk and empirical coverage are:
\begin{eqnarray}
\label{eq:emp_risk}
\hat{R}(\tau) &=& \frac{1}{N_\tau}\sum_{i=1}^{N} \mathbf{1}\!\big[F(x_i, \mathcal{E}_i) \neq y_i\big] \cdot \mathbf{1}\!\big[\mathrm{conf}(x_i, \mathcal{E}_i) \geq \tau\big] \\
\label{eq:emp_coverage}
\hat{\phi}(\tau) &=& \frac{N_\tau}{N}.
\end{eqnarray}

Sweeping $\tau$ yields a set of operating points $\{(\hat{\phi}(\tau), \hat{R}(\tau))\}$ that traces a risk--coverage curve.
\\ \\
\noindent
\textbf{Scalar summaries.}
We summarize the curve via the \emph{area under the risk--coverage curve} (AURC), computed by sorting operating points by coverage and applying trapezoidal integration over the discrete curve. Lower AURC indicates better selective performance across coverage levels. A formal definition is provided in Appendix~\ref{sec:appendix_aurc}. We additionally report \textbf{Risk@$\phi$}, the risk at fixed coverage levels $\phi \in \{0.8, 0.9\}$, capturing reliability at practically relevant operating points. 

\subsection{Theoretical Properties}
\label{sec:theory}

The proposed pipeline admits several formal properties that help explain its empirical reliability. In particular, under a mild rank-calibration assumption on the confidence score, selective risk is guaranteed to decrease monotonically as coverage is reduced. Additionally, the empirical selective risk concentrates around its population value at standard rates, implying that risk--coverage estimates are statistically reliable at moderate coverage levels. Finally, the deterministic aggregation rules can be interpreted as conservative approximations to Bayes-optimal decisions under asymmetric loss. Formal statements, proofs, and detailed discussion of assumptions are provided in Appendix~\ref{app:theory}.

\begin{figure}[t]
    \centering
    \includegraphics[width=\columnwidth]{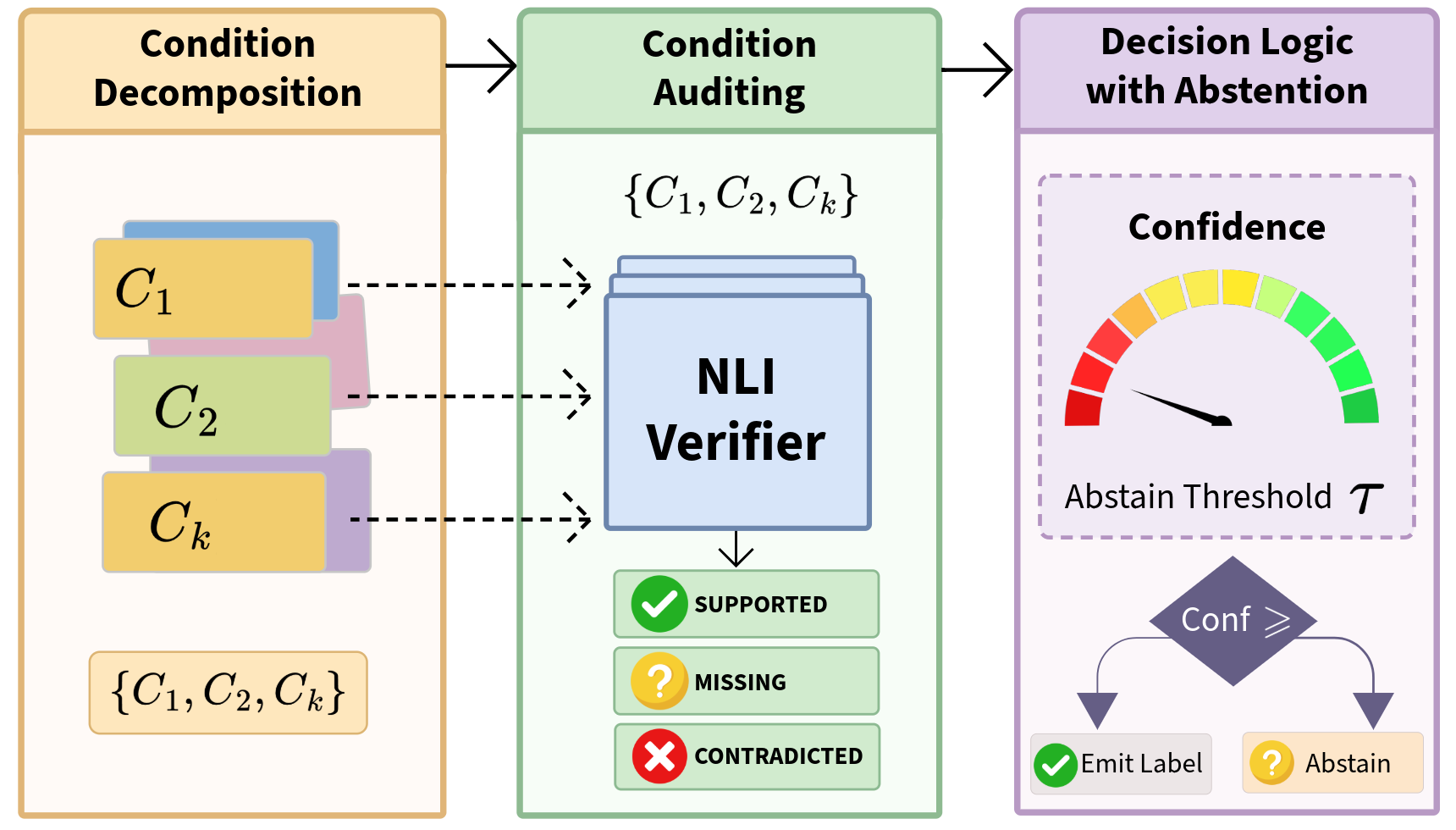}
    \caption{
    Overview of the abstention-aware scientific reasoning pipeline.
    An input claim or question is decomposed into a set of conditions.
    Each condition is independently evaluated against available evidence using the NLI verifier, producing condition-level judgments of support, contradiction, or missing evidence.
    These judgments are aggregated by a decision module that computes a confidence score and compares it against an abstention threshold $\tau$, emitting a final label only when the available evidence is sufficient.
    }
    \label{fig:system_overview}
\end{figure}

\section{Experiments}
\label{sec:exp}
\noindent
\textbf{Datasets.} 
We evaluate our approach on two scientific benchmarks that capture complementary reasoning settings. SciFact \cite{wadden-etal-2020-fact} focuses on scientific claim verification, where each input is a declarative claim paired with curated evidence annotated as supporting, refuting, or insufficient. PubMedQA \cite{jin-etal-2019-pubmedqa} evaluates biomedical question answering, requiring models to answer yes, no, or maybe questions based on abstracts from the biomedical literature. Together, these datasets span both verification and question answering tasks, and represent domains where incorrect conclusions can be particularly costly. For each dataset, we evaluate performance under both unconditional prediction and selective prediction with abstention.
\\ \\
\noindent
\textbf{Models.}
We evaluate six language models spanning multiple architectural and deployment regimes. Encoder--decoder models are represented by Flan-T5-large\cite{chung2022scalinginstructionfinetunedlanguagemodels}, accessed via HuggingFace Transformers and run locally. Open-weight chat models include Llama\cite{grattafiori2024llama3herdmodels} (llama-3.3-70b-instruct), Mistral\cite{jiang2023mistral7b}, and DeepSeek\cite{deepseekai2025deepseekr1incentivizingreasoningcapability}, accessed through hosted inference APIs and treated as black-box generators. We also evaluate proprietary models accessed through external APIs, namely gpt-4o-mini\cite{openai2024gpt4ocard} and gpt-5.2\cite{singh2025openaigpt5card}. Openrouter was used for API access to the models. To ensure consistent and centralized access to hosted language models, all API-based models are accessed through OpenRouter\footnote{https://openrouter.ai/}, which provides a unified endpoint for querying multiple providers under identical request and logging semantics. All models are used exclusively for condition decomposition and answer generation; evidence auditing is performed by a fixed NLI model across all experiments. Specifically, we use a DeBERTa-based cross-encoder\footnote{https://huggingface.co/cross-encoder/nli-deberta-v3-base} trained for NLI to score entailment, contradiction, and neutrality between conditions and evidence sentences. This design ensures that performance differences arise from the generative model and the decision structure, rather than from changes in the audit component.
\\ \\
\noindent
\textbf{Experimental Setup.}
For each dataset, we fix the evidence inputs, decision thresholds, and auditing model across all runs. Condition decomposition is performed using the specified generative model unless explicitly disabled in ablation settings. For SciFact, gold evidence sentences provided by the dataset are used. For PubMedQA, evidence consists of all sentences in the associated abstract, truncated to a fixed maximum length. Predictions are generated deterministically with zero temperature decoding. Abstention decisions are derived from the final confidence score produced by the decision module, and risk--coverage curves are computed by varying the abstention threshold post hoc. All runs are executed with fixed random seeds, and outputs including predictions, confidence scores, and audit traces are logged to disk for reproducibility. Configuration files and scripts used to generate all results are included in the shared code repository.

\section{Results}
\label{sec:results}

We evaluate abstention-aware scientific reasoning on two benchmarks that capture complementary scientific settings: SciFact, which focuses on claim verification with curated evidence, and PubMedQA, which evaluates question answering grounded in biomedical abstracts. For each dataset, we report both standard classification metrics and selective prediction metrics to characterize model behavior under uncertainty. In addition to accuracy and macro-F1, which measure unconditional performance when a model must answer every query, we report the area under the risk–coverage curve (AURC) and risk at fixed coverage levels. Risk is defined as the empirical error rate among non-abstained predictions, while coverage denotes the fraction of instances on which a model chooses to answer. Lower AURC and lower risk at a given coverage indicate safer behavior, as the model concentrates its errors among abstained instances rather than committed predictions. Together, these metrics allow us to distinguish between models that achieve similar accuracy but differ substantially in how reliably they decide when available evidence is sufficient to justify an answer.

\begin{table*}[t]
\centering
\small
\setlength{\tabcolsep}{3pt}
\renewcommand{\arraystretch}{1.05}
\begin{tabular}{l l c c c c c}
\toprule
\textbf{Task} & \textbf{Model} & Acc. $\uparrow$ & F1 $\uparrow$ & AURC $\downarrow$ & R@0.8 $\downarrow$ & R@0.9 $\downarrow$ \\
\midrule
\multirow{6}{*}{PubMedQA}
& flan-t5-large   & 0.56  & 0.374 & 0.389 & 0.463 & 0.451 \\
& gpt-4o-mini     & 0.485 & 0.355 & 0.411 & 0.560 & 0.542 \\
& deepseek-chat   & 0.44  & 0.312 & 0.444 & 0.590 & 0.575 \\
& llama-3.3-70b   & 0.40  & 0.300 & 0.429 & 0.619 & 0.633 \\
& mistral-large   & 0.375 & 0.275 & 0.443 & 0.663 & 0.651 \\
& gpt-5.2-chat    & 0.35  & 0.254 & 0.464 & 0.656 & 0.675 \\
\midrule
\multirow{6}{*}{SciFact}
& mistral-large   & 0.405 & 0.192 & 0.221 & 0.506 & 0.549 \\
& deepseek-chat   & 0.40  & 0.191 & 0.224 & 0.506 & 0.561 \\
& llama-3.3-70b   & 0.40  & 0.190 & 0.225 & 0.506 & 0.556 \\
& flan-t5-large   & 0.395 & 0.189 & 0.228 & 0.496 & 0.551 \\
& gpt-4o-mini     & 0.395 & 0.189 & 0.223 & 0.506 & 0.561 \\
& gpt-5.2-chat    & 0.395 & 0.189 & 0.226 & 0.506 & 0.559 \\
\bottomrule
\end{tabular}
\vspace{-0.4em}
\caption{Summary of unconditional accuracy metrics and selective prediction metrics across tasks and models. Risk@0.8 and Risk@0.9 denote risk at 0.8 and 0.9 coverage, respectively.}
\label{tab:main_results}
\vspace{-0.6em}
\end{table*}

\noindent
\textbf{Limited Separation Under Accuracy Metrics.}
Across both SciFact and PubMedQA, we observe that raw accuracy and macro-F1 provide only limited separation between models, as summarized in Table~\ref{tab:main_results}. Despite substantial differences in architecture, scale, and training methodology, most models cluster within a narrow performance range under unconditional evaluation. On SciFact, accuracies concentrate around the high 0.3 to low 0.4 range, with macro-F1 scores differing only marginally across all evaluated models. Even large-scale chat models and proprietary APIs do not substantially outperform smaller or instruction-tuned alternatives when forced to answer every claim. PubMedQA exhibits slightly higher variance in accuracy, yet the overall pattern remains similar: no single model consistently dominates across both datasets under standard metrics. These results indicate that, for scientific verification and biomedical question answering, absolute accuracy alone is a weak indicator of reliability. Improvements in model scale or architectural sophistication do not translate into proportional gains in correctness when models are required to answer every query, particularly in settings where evidence is incomplete or ambiguous.

\begin{figure}[t]
    \centering
    \begin{subfigure}[t]{0.86\linewidth}
        \centering
        \includegraphics[width=\linewidth]{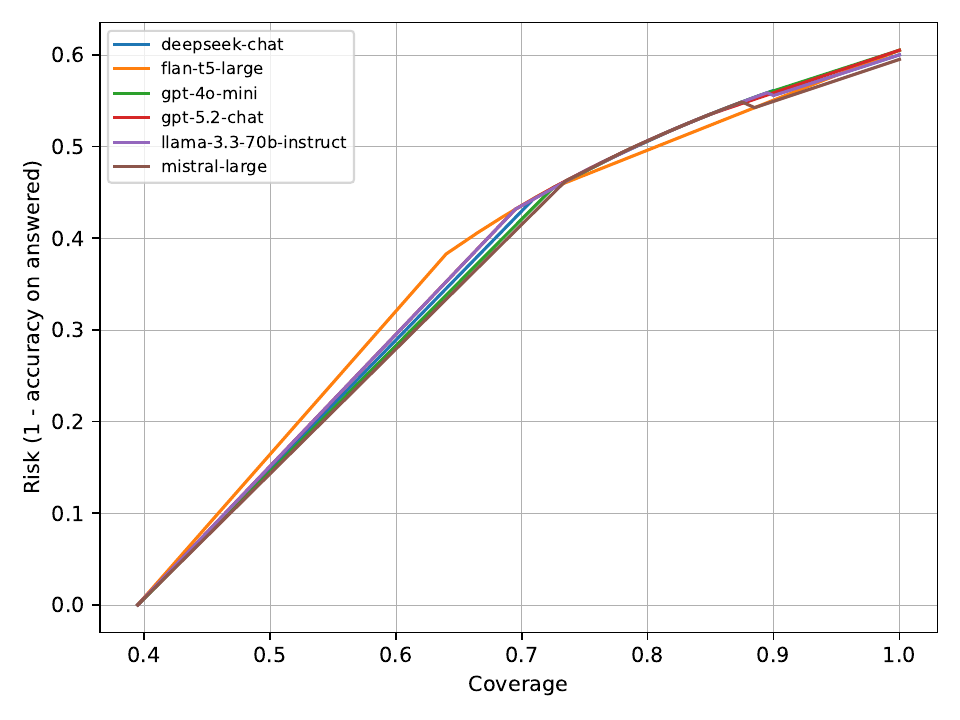}
    \end{subfigure}

    \begin{subfigure}[t]{0.86\linewidth}
        \centering
        \includegraphics[width=\linewidth]{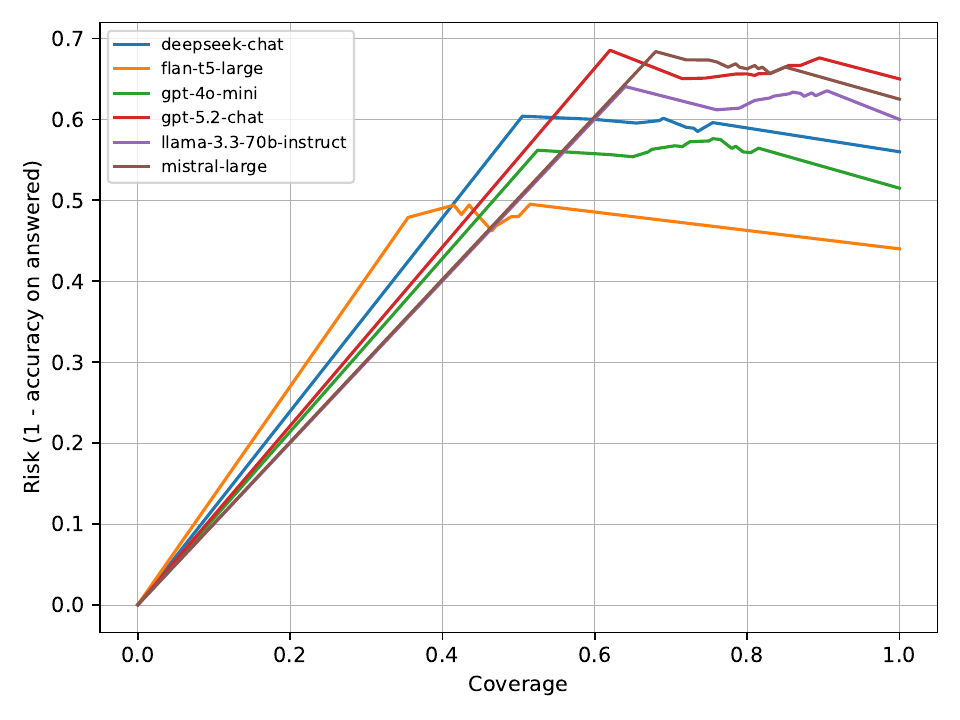}
    \end{subfigure}

    \vspace{-0.3em}
    \caption{\textbf{Risk--coverage curves} for (top) SciFact and (bottom) PubMedQA. Lower curves indicate better reliability under selective prediction.}
    \label{fig:rc_main}
\end{figure}

\noindent
\textbf{Risk--Coverage Reveals Substantial Reliability Gains.}
In contrast, selective evaluation showcases systematic differences that are not visible through accuracy metrics alone. Risk--coverage curves in Figure~\ref{def:risk_coverage}, together with quantitative summaries in Table~\ref{tab:main_results}, show that all models exhibit a pronounced tradeoff between coverage and risk. At full coverage, risk remains high across both datasets, reflecting frequent incorrect or weakly supported answers. As coverage is reduced through confidence-based abstention, risk declines sharply. For both SciFact and PubMedQA, abstaining reduces error among answered predictions by a substantial margin, often halving risk relative to full coverage. These trends are consistent across all evaluated models. Differences between models manifest primarily as shifts along similar risk--coverage curves rather than fundamentally different tradeoffs. This suggests that abstention behavior, rather than architectural differences, is the dominant factor governing reliability under selective decision making. While we do not explicitly report calibration metrics such as expected calibration error, risk--coverage curves implicitly characterize calibration among accepted predictions. As coverage decreases, risk consistently declines, indicating that confidence scores correlate with correctness. We also note that confidence is derived from evidence-level signals rather than model logits, making it inherently sensitive to evidence completeness. Investigating robustness under systematic evidence removal is a promising extension.
\\ \\
\noindent
\textbf{Instruction Tuning and Conservative Decision Making.}
While raw accuracy remains tightly clustered, selective risk metrics reveal a consistent pattern: instruction-tuned encoder--decoder models such as FLAN-T5 frequently achieve the lowest risk at comparable coverage levels. As shown in Table~\ref{tab:main_results}, FLAN-T5 attains among the lowest AURC values and reduced risk at both 80\% and 90\% coverage, despite exhibiting accuracy comparable to or lower than larger chat-oriented models. We attribute this behavior to more conservative response patterns and tighter adherence to task instructions. In practice, FLAN-T5 produces fewer speculative condition decompositions and more readily abstains when evidence is insufficient. This aligns well with abstention-aware evaluation, where avoiding unsupported answers is prioritized over maximizing coverage. As a result, FLAN-T5 demonstrates stronger reliability under selective prediction even without improvements in unconditional accuracy. Conversely, larger chat-style models often maintain higher coverage at the cost of increased risk, indicating a tendency to answer confidently even in uncertain settings. This tradeoff is not penalized under standard accuracy metrics but becomes visible under risk--coverage analysis.
\\ \\
\noindent
\textbf{Abstention Dominates Model Choice.}
A central finding across both datasets is that abstention strategy has a greater impact on reliability than model selection. When operating at comparable coverage levels, weaker and stronger models often achieve similar risk. In several cases, a smaller or instruction-tuned model operating at reduced coverage attains lower risk than a larger model forced to answer all queries. This effect holds consistently across SciFact and PubMedQA and across all evaluated model families. Once coverage is controlled, performance differences between models narrow substantially. These results indicate that scientific reliability depends less on maximizing answer frequency and more on recognizing when available evidence is insufficient to justify a conclusion.
\\ \\
\noindent
\textbf{Differences Between Verification and Question Answering.}
Although the overall trends are consistent, SciFact and PubMedQA exhibit different operating regimes. SciFact shows uniformly high risk at full coverage, reflecting the difficulty of claim verification even when curated evidence is available. PubMedQA allows slightly higher safe coverage before risk increases sharply, likely due to richer contextual information in biomedical abstracts. Nevertheless, abstention remains critical in both settings. Even in PubMedQA, forcing models to answer all questions leads to a rapid increase in error, underscoring the importance of selective prediction in scientific domains.
\\ \\ 
\noindent
\textbf{Implications for Scientific Evaluation.}
Taken together, these results suggest that the primary challenge in scientific reasoning is not selecting a single best-performing model, but determining when an answer is justified by available evidence. Traditional accuracy-focused evaluation obscures this distinction by conflating confident errors with informed decisions. Abstention-aware evaluation provides a more faithful characterization of scientific reliability. By explicitly modeling the decision to answer or abstain, it highlights the conditions under which models can be trusted and exposes failure modes that are invisible under standard metrics. These findings motivate the use of risk--coverage analysis as a standard component of evaluation for scientific reasoning systems, particularly in settings where incorrect answers carry high cost.

\section{Ablations}
\label{sec:ablations}
To better understand the contribution of individual components in our verification pipeline, we conduct a set of targeted ablation experiments. The goal of these experiments is not to maximize overall performance, but to isolate the effect of claim decomposition, evidence auditing, and selective abstention on reliability. To limit experimental complexity and improve interpretability, all ablations are conducted using two representative language models: one open-weight model (Mistral Large) and one proprietary model (GPT-5.2). Unless otherwise noted, all other settings are kept identical to the full pipeline. Since the no-audit and no-abstain settings bypass model-dependent reasoning stages, outcomes collapse to deterministic decision rules shared across models, resulting in identical metrics across architectures. This behavior is expected and highlights which components are responsible for meaningful variation in scientific reliability. Some ablation settings, particularly \texttt{no-audit}, yield degenerate metrics such as 100\% accuracy or zero risk. These results do not indicate perfect reasoning performance. Rather, they arise because disabling auditing collapses the decision rule to a trivial default that mirrors the majority or null label, while confidence-based abstention suppresses erroneous predictions. We therefore interpret these ablations diagnostically rather than comparatively. Their purpose is to demonstrate that each component of the pipeline is necessary to avoid degenerate behavior, not to serve as competitive baselines.\\ \\
\noindent
\textbf{Ablation 1: No Decomposition.}
In this setting, the claim decomposition stage is removed. Each input is treated as a single atomic condition, which is passed directly to the downstream auditing and decision modules. This ablation tests whether explicitly decomposing scientific claims into minimal conditions contributes meaningfully to verification reliability, or whether similar behavior can be achieved by auditing the claim as a whole.  In the \emph{no-decompose} setting, each claim or question is treated as a single unit rather than being decomposed into minimal conditions. Interestingly, as shown in the Table \ref{tab:ablations_singlecol}, this setting can yield comparable or even lower selective risk than the full pipeline at certain coverage levels. This behavior is also visible when comparing AURC and Risk@0.8 values against those in Table~\ref{tab:main_results}. This apparent improvement does not contradict the role of decomposition. Instead, it reflects a more conservative decision process: without decomposition, the system audits a single coarse condition, which often leads to higher uncertainty scores and earlier abstention. As a result, fewer borderline cases are answered, artificially lowering risk under selective evaluation. While this behavior can be beneficial from a risk minimization perspective, it comes at the cost of expressiveness and diagnostic power. The system can no longer identify which specific sub-claims are unsupported, nor can it distinguish partial support from complete failure. Decomposition therefore serves a structural role, enabling fine-grained attribution and principled auditing, even if it sometimes exposes additional failure modes that increase measured risk.\\ \\
\noindent
\textbf{Ablation 2: No Audit.}
This ablation disables the evidence auditing stage entirely. The model still generates conditions, but no NLI checks are performed against the provided evidence. As a result, the system defaults to a non-committal decision with minimal confidence. This setting serves as a lower-bound baseline, illustrating the necessity of explicit evidence verification rather than relying solely on generation or heuristics. Table~\ref{tab:ablations_singlecol} shows the effect of removing evidence auditing entirely. In this setting, both models achieve near-perfect or perfect accuracy on SciFact, far exceeding the values reported in Table~\ref{tab:main_results}. However, this improvement is illusory. Without auditing, predictions are no longer conditioned on evidence, and decisions collapse to trivial or dataset-biased heuristics. Selective metrics make this failure explicit. Risk and AURC collapse to degenerate values, indicating that confidence no longer reflects correctness. Compared to the full pipeline in Table~\ref{tab:main_results}, the no-audit ablation demonstrates that high accuracy alone is insufficient and can be actively misleading. Evidence auditing is therefore essential for grounding predictions and preventing spurious reliability gains.

\begin{table*}[t]
\centering
\normalsize
\setlength{\tabcolsep}{2.5pt}
\renewcommand{\arraystretch}{1.0}
\begin{tabular}{l l l c c c c}
\toprule
\textbf{Ablation} & \textbf{Model} & \textbf{Task} & Acc $\uparrow$ & F1 $\uparrow$ & AURC $\downarrow$ & R@0.8 $\downarrow$ \\
\midrule
\multirow{4}{*}{\textbf{A1}}
& \multirow{2}{*}{GPT-5.2} 
& PubMedQA & 0.55  & 0.313 & 0.444 & 0.472 \\
& 
& SciFact  & 0.395 & 0.189 & 0.227 & 0.502 \\
& \multirow{2}{*}{Mistral} 
& PubMedQA & 0.55  & 0.313 & 0.444 & 0.472 \\
& 
& SciFact  & 0.395 & 0.189 & 0.227 & 0.502 \\
\midrule
\multirow{4}{*}{\textbf{A2}}
& \multirow{2}{*}{GPT-5.2} 
& PubMedQA & 0.175 & 0.099 & 0.413 & 0.660 \\
& 
& SciFact  & 1.000 & 0.333 & 0.000 & 0.000 \\
& \multirow{2}{*}{Mistral} 
& PubMedQA & 0.175 & 0.099 & 0.413 & 0.660 \\
& 
& SciFact  & 1.000 & 0.333 & 0.000 & 0.000 \\
\midrule
\multirow{4}{*}{\textbf{A3}}
& \multirow{2}{*}{GPT-5.2} 
& PubMedQA & 0.34  & 0.243 & 0.471 & 0.717 \\
& 
& SciFact  & 0.000 & 0.000 & 0.843 & 1.000 \\
& \multirow{2}{*}{Mistral} 
& PubMedQA & 0.395 & 0.289 & 0.466 & 0.663 \\
& 
& SciFact  & 0.000 & 0.000 & 0.833 & 1.000 \\
\bottomrule
\end{tabular}
\vspace{-0.3em}
\caption{\textbf{Ablation results.}
A1: no decomposition; A2: no auditing; A3: no abstention.
Acc/F1 report unconditional performance, while AURC and R@0.8 measure selective reliability (lower is better).}
\label{tab:ablations_singlecol}
\vspace{-0.6em}
\end{table*}

\noindent
\textbf{Ablation 3: No Abstention.}
In the final ablation, the selective abstention mechanism is removed. The system is forced to produce a definitive answer for every input, regardless of confidence or evidence support. This experiment isolates the role of abstention in controlling error and directly quantifies the trade-off between coverage and risk when abstention is disallowed. Table~\ref{tab:ablations_singlecol} reports results when abstention is disabled and the system is forced to answer every query. Across both datasets and models, this ablation consistently produces the highest risk and worst selective metrics. Risk@0.8 and Risk@0.9 approach one in multiple settings, indicating that most high-confidence answers are incorrect. In contrast, the full pipeline in Table~\ref{tab:main_results} shows substantial reductions in risk at comparable coverage levels. This confirms that abstention is the primary mechanism by which the system controls error. While decomposition and auditing shape how evidence is evaluated, abstention governs whether an answer should be produced at all. Its removal fundamentally breaks selective prediction and exposes the brittleness of unconditional answering in scientific domains.

 \section{Limitations}
 \label{sec:limitations}
While the proposed framework demonstrates consistent reliability gains across scientific verification tasks, several limitations are worth noting.
\\
\noindent
\textbf{NLI model reliance.} This approach relies on NLI models to audit conditions against evidence. As a result, errors or biases in the NLI model can propagate to downstream decisions. This limitation is inherent to any verification pipeline that separates reasoning from evidence evaluation. However, our experiments intentionally fix the NLI component across all conditions and models, ensuring that observed differences arise from decision structure rather than from changes in the verifier itself. Moreover, the framework is modular and allows stronger or domain-specific NLI models to be substituted without altering the overall methodology.
\\
\noindent
\textbf{Evaluation Scope.} Our evaluation focuses on two curated scientific benchmarks, SciFact and PubMedQA, which emphasize textual evidence and abstract-level reasoning. While these datasets cover both claim verification and question answering, they do not capture all forms of scientific reasoning, such as numerical simulation, experimental design, or multi-document synthesis at scale. However, the proposed pipeline is agnostic to the source of evidence and can be extended to other scientific modalities as suitable datasets become available.
\\
\noindent
\textbf{Evaluation Metrics.} Our analysis prioritizes risk–coverage behavior over raw accuracy, which may differ from conventional evaluation practices. While this choice reflects real-world scientific use cases where incorrect answers are costly, it may limit direct comparison with prior work that reports only accuracy-based metrics. To address this, we report both standard classification metrics and risk–coverage curves, enabling readers to interpret results under both evaluation paradigms.

\section{Conclusion and Future Work}
\label{sec:conclusion}

We introduced an abstention-aware framework that decomposes scientific inputs into minimal conditions, audits each condition against evidence, and explicitly determines when available evidence is sufficient to support a conclusion. Across SciFact and PubMedQA, we show that this structure consistently reduces error at moderate coverage levels, even when gains in raw accuracy are modest.  Our results reveal that differences between language models are often secondary to the impact of principled abstention. Reliability improves primarily by controlling when models answer, not by forcing a definitive prediction in all cases. Risk--coverage analysis exposes this trade-off more clearly than accuracy alone, particularly in high-stakes scientific settings.
Beyond empirical findings, this work frames scientific reasoning as a selective inference problem and separates reasoning, evidence auditing, and decision making into modular components. This design aligns with scientific practice, where uncertainty is expected in the absence of sufficient evidence. Future work includes extending the framework to multimodal data, incorporating domain-specific verifiers, and developing benchmarks that explicitly reward calibrated abstention.




\appendix

\section{Additional Methodological Details and Theory}
\label{sec:appendix}

\subsection{Factorization View and Audit Independence}
\label{sec:appendix_factorization}

From a modeling perspective, decomposition induces a structured factorization of the reasoning task. Rather than modeling $P(y \mid x, \mathcal{E})$ as a monolithic function, we decompose reasoning through condition-level assessments. By the law of total probability, marginalizing over audit outcome vectors:
\begin{equation}
\label{eq:factorization}
P(y \mid x, \mathcal{E}) \;=\; \sum_{\mathbf{a} \,\in\, \mathcal{A}^k} P\!\big(y \mid \mathbf{a},\, \boldsymbol{\beta},\, x,\, \mathcal{E}\big) \;\cdot\; P\!\big(\mathbf{a} \mid c_1, \dots, c_k,\, \mathcal{E}\big),
\end{equation}
where $\mathbf{a} = (a_1, \dots, a_k)$ and $\boldsymbol{\beta} = (\beta_1, \dots, \beta_k)$. Note that $P(\mathbf{a} \mid c_1, \dots, c_k, \mathcal{E})$ does not condition on $x$ directly; audit outcomes depend on $x$ only through the decomposed conditions, since the NLI model receives only $(c_i, e_j)$ pairs. Eq.~\ref{eq:factorization} is exact and introduces no approximation.

The pipeline introduces two approximations to this expression. First, the joint audit distribution is factored under conditional independence (Assumption~\ref{assump:cond_indep}). Second, the stochastic label model $P(y \mid \mathbf{a}, \boldsymbol{\beta}, x, \mathcal{E})$ is replaced by the deterministic aggregation function $h$ (Section~\ref{sec:decision}), combined with hard-threshold discretization of audit outcomes (Section~\ref{sec:audit}). Together these yield the working approximation:
\begin{equation}
\label{eq:approx_factorization}
P(y \mid x, \mathcal{E}) \;\approx\; \sum_{\mathbf{a} \,\in\, \mathcal{A}^k} \mathbf{1}\!\big[h(\mathbf{a}, \boldsymbol{\beta}) = y\big] \;\prod_{i=1}^{k} P(a_i \mid c_i, \mathcal{E}).
\end{equation}

Eq.~\ref{eq:approx_factorization} makes explicit that two modeling choices are combined: conditional independence (the product) and deterministic aggregation (the indicator replacing the stochastic $P(y \mid \mathbf{a}, \boldsymbol{\beta}, x, \mathcal{E})$). The independence assumption is formalized below; the deterministic aggregation is a design choice justified by its interpretability and consistency with the asymmetric loss structure (Section~\ref{sec:decision}).

\begin{assumption}[Conditional Independence of Audits]
\label{assump:cond_indep}
Given evidence set $\mathcal{E}$ and a fixed decomposition $C = \{c_1, \dots, c_k\}$, condition audit outcomes are approximately independent:
\begin{equation}
\label{eq:cond_indep}
P(a_1, \dots, a_k \mid c_1, \dots, c_k, \mathcal{E}) \;\approx\; \prod_{i=1}^{k} P(a_i \mid c_i, \mathcal{E}).
\end{equation}
\end{assumption}

This assumption holds when conditions target semantically distinct aspects of the input and are evaluated against different evidence sentences. It may be violated when multiple conditions depend on the same evidence sentence (e.g., overlapping factual claims drawn from a single source), in which case audit outcomes become positively correlated. In the worst case, such correlations can cause the pipeline to double-count evidence---treating a single supporting sentence as independent support for multiple conditions---which inflates confidence. However, the max-aggregation in the confidence score (Section~\ref{sec:confidence}) partially mitigates this effect, since it depends on a single condition's margin rather than accumulating signals across conditions.

Empirically, the ablation removing decomposition (Section~\ref{sec:ablations}) provides an indirect test: without decomposition, the independence assumption is trivially satisfied ($k=1$), and the resulting changes in risk--coverage behavior are moderate, suggesting the pipeline is not critically sensitive to this assumption. We note that the theoretical analysis in subsequent sections conditions on a fixed decomposition $C$ and criticality assignment $\boldsymbol{\beta}$. Since decomposition is performed by a generative model, variability across runs could in principle affect downstream guarantees. However, we observe empirically that the number and semantic roles of critical conditions are stable across runs (Section~\ref{sec:ablations}), suggesting this conditioning is practically reasonable. Across datasets, inputs typically decompose into 2--4 conditions, with approximately 60--70\% marked as critical.

\subsection{Confidence Design Rationale and Alternatives}
\label{sec:appendix_confidence_rationale}

\paragraph{Design rationale.}
The max-absolute-margin aggregation encodes a \emph{decisive evidence} principle: confidence is governed by the single strongest evidential signal among critical conditions, whether supporting or contradicting. This reflects a design choice that a single well-supported critical condition provides more decision-relevant information than the average signal across all conditions.

We note that the margin $m_i$ does not distinguish between \emph{absence} of evidence ($s_i^{\mathrm{ent}} \approx s_i^{\mathrm{con}} \approx 0$) and \emph{conflicting} evidence ($s_i^{\mathrm{ent}} \approx s_i^{\mathrm{con}} \gg 0$); both yield $|m_i| \approx 0$ and thus low confidence. While both appropriately trigger caution, they represent distinct epistemic states---ignorance versus disagreement. Disentangling these cases, for example through separate uncertainty types, is a direction for future work.

\paragraph{Alternative aggregation strategies.}
Alternative confidence aggregations encode different inductive biases:
\begin{itemize}
    \item \textbf{Min-margin}: $\mathrm{conf}_{\min}(x) = \min_{c_i \in C_{\mathrm{crit}}} |m_i|$. This implements a \emph{weakest-link} principle, where confidence is limited by the least-supported critical condition, typically reducing coverage while improving reliability.
    \item \textbf{Mean-margin}: $\mathrm{conf}_{\mathrm{avg}}(x) = \frac{1}{|C_{\mathrm{crit}}|} \sum_{c_i \in C_{\mathrm{crit}}} |m_i|$. This smooths over individual conditions but can dilute decisive signals and obscure the contribution of any single critical condition.
\end{itemize}

We adopt max-margin as the default because it is most sensitive to the presence of any strong evidence. In particular, under our asymmetric loss setting, a single clear contradictory signal should yield high confidence in a refutation even if other conditions remain ambiguous.

\subsection{Formal AURC Definition via Generalized Inverse}
\label{sec:appendix_aurc}

Let $r\colon (0,1] \to [0,1]$ denote the function mapping each coverage level $c$ to the selective risk achieved at that coverage:
\begin{equation}
\label{eq:rc_function}
r(c) = R\!\big(F,\, g_{\tau(c)}\big), \quad \text{where}\quad \tau(c) = \inf\!\big\{\tau \geq 0 : \phi(\tau) \leq c\big\}
\end{equation}
is the generalized inverse of the non-increasing coverage function $\phi(\tau)$. The AURC is then:
\begin{equation}
\label{eq:aurc}
\mathrm{AURC} = \int_0^1 r(c)\, dc.
\end{equation}

Since $r(c) \in [0, 1]$ for all $c \in (0, 1]$ and $r$ is bounded, the integral is well-defined even though $r$ may be undefined at $c = 0$ (where no instances are selected); the single-point boundary has Lebesgue measure zero and does not affect the integral. In practice, AURC is approximated via trapezoidal integration over the discrete set of operating points induced by sweeping $\tau$.

\subsection{Computational Complexity}
\label{sec:appendix_complexity}

For a single input with $k$ decomposed conditions and $n$ evidence sentences, the auditing stage requires $k \cdot n$ NLI forward passes. In practice, $k \in \{2, 4\}$ and $n$ is bounded by the abstract length, yielding $O(kn)$ inference calls per instance. Since the NLI model is a fixed cross-encoder applied independently to each pair, these calls are trivially parallelizable, and the total pipeline cost is dominated by the generative model used for condition decomposition.

\section{Theoretical Analysis}
\label{app:theory}

This appendix provides a formal analysis of the abstention-aware verification pipeline. All results condition on a fixed condition decomposition $C = \{c_1,\dots,c_k\}$ and criticality assignment $\boldsymbol{\beta}$, as produced by the decomposition stage. The goal of this section is to explain why the proposed confidence-based abstention mechanism yields reliable selective behavior, and how the aggregation rules align with asymmetric decision costs in scientific reasoning.

\subsection{Monotonicity of Selective Risk}
\label{app:monotonicity}

We begin by formalizing a mild ordering condition on the confidence score under which selective risk improves monotonically as coverage is reduced.

\begin{definition}[Rank-Calibration]
A confidence score $\mathrm{conf}\colon \mathcal{X} \times \mathbb{E} \to [0,1]$ is \emph{rank-calibrated} with respect to a prediction function $F$, loss $\ell$, and data distribution $\mathcal{D}$ if for all $t_1 < t_2$ with $\phi(t_1) > \phi(t_2) > 0$,

\begin{equation}
\label{eq:rank_calibration}
\begin{aligned}
\mathbb{E}\!\big[\ell(F(x,\mathcal{E}),y)\mid \mathrm{conf}(x,\mathcal{E})\in[t_1,t_2)\big]
\;\ge\;& \\
\mathbb{E}\!\big[\ell(F(x,\mathcal{E}),y)\mid \mathrm{conf}(x,\mathcal{E})\ge t_2\big].&
\end{aligned}
\end{equation}

\end{definition}

Rank-calibration requires only that lower-confidence predictions incur equal or greater expected loss than higher-confidence predictions. It is strictly weaker than probabilistic calibration and depends only on the ordering induced by the confidence score.

\begin{proposition}[Monotonicity of Selective Risk]
If $\mathrm{conf}$ is rank-calibrated with respect to $F$, $\ell$, and $\mathcal{D}$, then for any $\tau_1 < \tau_2$ with $\phi(\tau_2) > 0$,
\begin{equation}
R(F,g_{\tau_2}) \le R(F,g_{\tau_1}).
\end{equation}
\end{proposition}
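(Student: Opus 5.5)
The plan is to write the selective risk at the lower threshold as a mixture of the risk on the ``band'' $\{\mathrm{conf}\in[\tau_1,\tau_2)\}$ and the risk at the higher threshold, and then use rank-calibration to compare the two mixture components. Throughout, abbreviate $L = \ell(F(x,\mathcal{E}),y)$ and $Z=\mathrm{conf}(x,\mathcal{E})$. By Eq.~\ref{eq:selection}, $g_\tau = \mathbf{1}[Z\ge\tau]$. Hence $R(F,g_\tau) = \mathbb{E}[L\mid Z\ge\tau]$ whenever $\phi(\tau)>0$. Since $\tau_1<\tau_2$, we have $\{Z\ge\tau_1\}\supseteq\{Z\ge\tau_2\}$, so $\phi(\tau_1)\ge\phi(\tau_2)>0$ and both risks are well defined.

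\textbf{Degenerate case.} If $\phi(\tau_1)=\phi(\tau_2)$, the two selection events differ only by a $\mathcal{D}$-null set. Then $R(F,g_{\tau_1})=R(F,g_{\tau_2})$ and there is nothing to prove. This case has to be split off because the definition of rank-calibration only applies when $\phi(t_1)>\phi(t_2)$.

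\textbf{Main case.} Assume $\phi(\tau_1)>\phi(\tau_2)>0$, and let $B=\{Z\in[\tau_1,\tau_2)\}$, so that $P(B)=\phi(\tau_1)-\phi(\tau_2)>0$. Splitting the numerator of Eq.~\ref{eq:selective_risk} along the disjoint union $\{Z\ge\tau_1\}=B\cup\{Z\ge\tau_2\}$ gives
\begin{equation*}
R(F,g_{\tau_1}) = \lambda\,\mathbb{E}[L\mid B] + (1-\lambda)\,\mathbb{E}[L\mid Z\ge\tau_2],
\qquad \lambda=\frac{\phi(\tau_1)-\phi(\tau_2)}{\phi(\tau_1)}\in(0,1).
\end{equation*}
Rank-calibration with $t_1=\tau_1$ and $t_2=\tau_2$ gives $\mathbb{E}[L\mid B]\ge \mathbb{E}[L\mid Z\ge\tau_2]=R(F,g_{\tau_2})$. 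A convex combination of two numbers is at least the smaller one, so $R(F,g_{\tau_1})\ge R(F,g_{\tau_2})$.

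\textbf{Remaining checks.} The only real obstacle is bookkeeping rather than analysis. One point is handling the equal-coverage case as above. The other is confirming that $L\ge 0$ is integrable, which holds because the 0--1 loss, or any bounded $\ell$, is bounded. With those two points in place, the argument is just the law of total expectation.
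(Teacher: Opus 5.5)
Your proof is correct and follows the paper's own argument: you use the law of total expectation to write $R(F,g_{\tau_1})$ as a convex combination of the band risk on $\{\mathrm{conf}\in[\tau_1,\tau_2)\}$ and $R(F,g_{\tau_2})$, then apply rank-calibration. You also treat the case $\phi(\tau_1)=\phi(\tau_2)$ separately, which the paper's proof skips even though rank-calibration is only defined for $\phi(t_1)>\phi(t_2)$ and $R_{\mathrm{mid}}$ would otherwise be conditioned on a null event.
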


\begin{proof}
Since $\phi(\tau)$ is non-increasing in $\tau$, $\phi(\tau_1) \ge \phi(\tau_2) > 0$. Partition the accepted set at threshold $\tau_1$ into
\[
A_{\ge} = \{\mathrm{conf}(x,\mathcal{E}) \ge \tau_2\}, \quad
A_{\mathrm{mid}} = \{\mathrm{conf}(x,\mathcal{E}) \in [\tau_1,\tau_2)\}.
\]
By the law of total expectation,
\begin{equation}
R(F,g_{\tau_1})
=
\frac{\phi(\tau_2)}{\phi(\tau_1)} R(F,g_{\tau_2})
+
\frac{\phi(\tau_1)-\phi(\tau_2)}{\phi(\tau_1)} R_{\mathrm{mid}},
\end{equation}
where
\[
R_{\mathrm{mid}} =
\mathbb{E}\!\big[\ell(F(x,\mathcal{E}),y)\mid \mathrm{conf}(x,\mathcal{E}) \in [\tau_1,\tau_2)\big].
\]
By rank-calibration, $R_{\mathrm{mid}} \ge R(F,g_{\tau_2})$. Since the above expression is a convex combination, the inequality follows.
\end{proof}

This result guarantees that tightening the abstention threshold cannot increase risk among accepted predictions, explaining the monotone risk--coverage curves observed empirically.

\subsection{Finite-Sample Concentration of Selective Risk}
\label{app:concentration}

We now quantify the reliability of empirical selective risk estimates computed from finite datasets.

\begin{proposition}[Finite-Sample Concentration]
Let $\{(x_i,\mathcal{E}_i,y_i)\}_{i=1}^{N}$ be drawn i.i.d.\ from $\mathcal{D}$. Under 0--1 loss, for any fixed threshold $\tau$ and $\epsilon > 0$, conditional on $N_\tau = n > 0$,
\begin{equation}
P\!\left(|\hat{R}(\tau) - R(F,g_\tau)| > \epsilon \,\middle|\, N_\tau=n\right)
\le
2\exp(-2n\epsilon^2).
\end{equation}
\end{proposition}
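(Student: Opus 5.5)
The plan is to reduce the statement to Hoeffding's inequality for i.i.d.\ Bernoulli variables, after showing that conditioning on the accepted count leaves the accepted losses i.i.d.\ with the right mean.

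For each sample $i$, set $S_i = g_\tau(x_i,\mathcal{E}_i) = \mathbf{1}[\mathrm{conf}(x_i,\mathcal{E}_i)\ge\tau]$ and $Z_i = \mathbf{1}[F(x_i,\mathcal{E}_i)\neq y_i]$. Both $F$ and $\mathrm{conf}$ are fixed deterministic functions, since we condition on a fixed decomposition and criticality assignment and the thresholds are held constant. Hence the pairs $(S_i,Z_i)$ are i.i.d.\ across $i$. Also $N_\tau=\sum_i S_i$, and by Eq.~\ref{eq:emp_risk}, $\hat R(\tau)=\frac{1}{N_\tau}\sum_i Z_i S_i$. The population quantity is $R(F,g_\tau)=\mathbb{E}[Z\mid S=1]=\mathbb{E}[ZS]/\phi(\tau)$; this is well defined because $N_\tau=n>0$ can occur only when $\phi(\tau)>0$.

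\textbf{Key step: conditional structure given $N_\tau=n$.} Condition further on the index set $I=\{i:S_i=1\}$, with $|I|=n$. By independence across samples, the joint law factorizes over $i$. So, given $I$, the variables $\{Z_i\}_{i\in I}$ are independent, and each has the law of $Z$ given $S=1$. This conditional law does not depend on which set $I$ was realized. Therefore, given $I$, the quantity $\hat R(\tau)$ is the average of $n$ i.i.d.\ $\{0,1\}$-valued variables with mean $R(F,g_\tau)$.

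Hoeffding's inequality for variables in $[0,1]$ then gives
\[
P\big(|\hat R(\tau)-R(F,g_\tau)|>\epsilon \mid I\big)\le 2\exp(-2n\epsilon^2).
\]
The right-hand side is the same for every $I$ with $|I|=n$. Averaging over $I$ conditional on $N_\tau=n$, by the tower property, yields the claimed bound.

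\textbf{Main obstacle.} The delicate part is justifying that conditioning on $N_\tau$ preserves the i.i.d.\ structure and the mean. This is a random-sample-size issue: a naive application of Hoeffding with $N_\tau$ in the exponent is invalid. I would write the factorization out explicitly:
\[
P(Z_I=z,\, S=s) = \prod_{i\in I} P(Z_i=z_i,\, S_i=1)\prod_{i\notin I}P(S_i=0).
\]
Dividing by $P(S=s)=\prod_{i\in I}\phi(\tau)\prod_{i\notin I}(1-\phi(\tau))$ gives the product of the conditional laws. I would also remark that the argument requires $\tau$ to be fixed independently of the sample. If $\tau$ were tuned on the same data, for example to hit a target coverage, a union bound over the finitely many distinct operating points would be needed.
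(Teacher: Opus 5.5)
Your proposal is correct and follows the same route as the paper: condition on $N_\tau=n$, note that the accepted error indicators are i.i.d.\ Bernoulli with mean $R(F,g_\tau)$, and apply Hoeffding's inequality. The paper only asserts in one line that the selected instances are i.i.d.\ under $\mathcal{D}_\tau$, whereas your factorization given the index set $I$, followed by the tower property, actually proves that step.
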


\begin{proof}
Condition on $N_\tau = n$. The selected instances are i.i.d.\ under the conditional distribution $\mathcal{D}_\tau = (\mathcal{D} \mid \mathrm{conf}(x,\mathcal{E}) \ge \tau)$. Define
\[
Z_j = \mathbf{1}[F(x_j,\mathcal{E}_j) \neq y_j], \quad j=1,\dots,n.
\]
Each $Z_j$ is Bernoulli with mean $R(F,g_\tau)$ and bounded in $[0,1]$. Since
\[
\hat{R}(\tau) = \frac{1}{n} \sum_{j=1}^{n} Z_j,
\]
Hoeffding’s inequality yields the stated bound.
\end{proof}

This result formalizes the intuition that selective risk estimates become unstable at very low coverage, motivating the focus on moderate-coverage regimes in evaluation.

\subsection{Decision-Theoretic Justification of Asymmetric Aggregation}
\label{app:decision_theory}

We now connect the deterministic aggregation rules to Bayes-optimal decision-making under asymmetric loss.

\begin{proposition}[Bayes-Optimal Decision Threshold]
Consider binary labels $y_+ = \texttt{SUPPORTS}$ and $y_- = \texttt{REFUTES}$ with asymmetric loss $\ell_{\mathrm{fs}} > \ell_{\mathrm{fr}} > 0$. Given observations $\mathbf{o}$, the Bayes-optimal decision predicts $y_+$ if and only if
\begin{equation}
\frac{P(y_+ \mid \mathbf{o})}{P(y_- \mid \mathbf{o})}
>
\frac{\ell_{\mathrm{fs}}}{\ell_{\mathrm{fr}}}.
\end{equation}
\end{proposition}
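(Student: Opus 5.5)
The argument is a direct conditional-expected-loss comparison restricted to the two labels $\{y_+, y_-\}$. Abstention is ignored here: the proposition concerns the choice \emph{between} the two committed labels.

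\textbf{Setting up the losses.} Predicting $y_+$ incurs loss only when the true label is $y_-$; this is false support, with cost $\ell_{\mathrm{fs}}$. Predicting $y_-$ incurs loss only when the true label is $y_+$; this is false refutation, with cost $\ell_{\mathrm{fr}}$. Correct predictions incur zero loss. Given observations $\mathbf{o}$, the posterior expected losses are therefore
\begin{equation*}
\rho(y_+\mid\mathbf{o}) = \ell_{\mathrm{fs}}\,P(y_-\mid\mathbf{o}), \qquad
\rho(y_-\mid\mathbf{o}) = \ell_{\mathrm{fr}}\,P(y_+\mid\mathbf{o}).
\end{equation*}
These are computed by conditioning the loss on $\mathbf{o}$ and summing over the two possible true labels.

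\textbf{Comparing the two actions.} The Bayes-optimal rule picks the action with the smaller posterior expected loss. Ties are broken toward $y_-$, the conservative choice consistent with Assumption~\ref{assump:loss}; this is what makes the stated inequality strict. Hence $y_+$ is predicted if and only if
\begin{equation*}
\ell_{\mathrm{fs}}\,P(y_-\mid\mathbf{o}) < \ell_{\mathrm{fr}}\,P(y_+\mid\mathbf{o}).
\end{equation*}
When $P(y_-\mid\mathbf{o})>0$, dividing both sides by $\ell_{\mathrm{fr}}\,P(y_-\mid\mathbf{o})$ gives the stated likelihood-ratio condition. Both $\ell_{\mathrm{fr}}>0$ and $\ell_{\mathrm{fs}}>0$ by hypothesis, so the division preserves the inequality direction.

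\textbf{Main obstacle: degenerate posteriors and ties.} The only real subtlety lies here. If $P(y_-\mid\mathbf{o})=0$, the ratio is $+\infty$ under the usual convention, and predicting $y_+$ has zero risk, so the equivalence still holds. If both posteriors vanish, the binary restriction is vacuous. I would state these conventions explicitly.

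Finally, I would remark on the consequence. Since $\ell_{\mathrm{fs}}/\ell_{\mathrm{fr}}>1$, the Bayes rule demands posterior odds strictly exceeding $1$ before affirming. This is the sense in which the unanimity requirement for \texttt{SUPPORTS} in Eq.~\ref{eq:agg_scifact} acts as a conservative surrogate for the higher threshold.
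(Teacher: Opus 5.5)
Your proof is correct and follows the paper's argument: you compute the posterior expected losses $\ell_{\mathrm{fs}}P(y_-\mid\mathbf{o})$ and $\ell_{\mathrm{fr}}P(y_+\mid\mathbf{o})$, choose the smaller one, and divide through. Your explicit tie-breaking toward $y_-$ and your handling of $P(y_-\mid\mathbf{o})=0$ are worthwhile refinements that the paper's one-line proof leaves implicit, and they are exactly what makes the strict ``if and only if'' hold as stated.
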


\begin{proof}
The expected losses are
\[
\mathbb{E}[\ell(y_+,y)\mid\mathbf{o}] = P(y_- \mid \mathbf{o}) \ell_{\mathrm{fs}}, \quad
\mathbb{E}[\ell(y_-,y)\mid\mathbf{o}] = P(y_+ \mid \mathbf{o}) \ell_{\mathrm{fr}}.
\]
Choosing the action with lower expected loss yields the stated inequality.
\end{proof}

\paragraph{Naive Bayes interpretation.}
Assume audit outcomes are conditionally independent given the true label:
\begin{equation}
P(\mathbf{a} \mid y) = \prod_{i=1}^{k} P(a_i \mid c_i, y).
\end{equation}
Then
\begin{equation}
\frac{P(y_+ \mid \mathbf{a})}{P(y_- \mid \mathbf{a})}
=
\frac{P(y_+)}{P(y_-)} \prod_{i=1}^{k}
\frac{P(a_i \mid c_i, y_+)}{P(a_i \mid c_i, y_-)}.
\end{equation}

For critical conditions, supported audits contribute likelihood ratios greater than one, while contradicted audits contribute ratios less than one. Requiring unanimous support for all critical conditions ensures the posterior odds exceed the asymmetric loss threshold, while a single contradicted critical condition is sufficient to reverse the decision. The aggregation rules thus implement a conservative approximation to Bayes-optimal behavior under asymmetric costs.

\subsection{Component Interaction and Failure Modes}
\label{app:interaction}

The formal properties above rely on the interaction of all pipeline components.

\emph{Decomposition without auditing} produces unverifiable conditions and confidence scores uncorrelated with correctness, violating rank-calibration.

\emph{Auditing without decomposition} collapses the reasoning structure to a single condition, eliminating factorized evidence assessment.

\emph{Abstention without structured confidence} breaks the connection between thresholding and risk reduction.

These theoretical observations are consistent with the ablation results reported in the main paper.

\subsection{Relation to Conformal Prediction}
\label{app:conformal}

The threshold-sweeping approach used to generate risk--coverage curves is related to split-conformal prediction, where calibration sets are used to guarantee coverage or risk bounds. While our framework does not provide distribution-free guarantees, monotonicity of selective risk and finite-sample concentration provide structural and statistical reliability. Incorporating conformal calibration into the pipeline is a natural extension.

\subsection{Scope and Assumptions}
\label{app:scope}

The theoretical results rely on the following assumptions:
\begin{enumerate}
\item Fixed condition decomposition and criticality assignment.
\item Approximate independence of audit outcomes.
\item Empirical rank-calibration of the confidence score.
\item Asymmetric loss structure reflecting scientific practice.
\item Naive Bayes independence used only for interpretive analysis.
\end{enumerate}

These results characterize the pipeline under stated assumptions, while empirical evaluation provides primary evidence of robustness in practice.

\end{document}